\newif\ifdraft
\setlist{leftmargin=10mm}
\newif\iffinal
    \newcommand{\tianhao}[1]{}
    \newcommand{\ruoxi}[1]{}
    \newcommand{\add}[1]{}
    \newcommand{\tianhao}[1]{{\bf \textcolor{purple}{[Tianhao: #1]}}}
    \newcommand{\ruoxi}[1]{{\bf \textcolor{BrickRed}{[Ruoxi:#1]}}}
    \newcommand{\add}[1]{\textcolor{purple}{#1}}
\newtheorem{theorem}{Theorem}
\newtheorem{lemma}[theorem]{Lemma}
\newtheorem{definition}[theorem]{Definition}
\newtheorem*{remark}{Remark}
\newtheorem{remark-star}{Remark}
\newtheorem{remark-star-1}{Remark}
\newtheorem*{proof-sketch}{Proof Sketch}
\newenvironment{customthm}[1]
  {\innercustomthm}
  {\endinnercustomthm}
\newcommand{\E}{\mathbb{E}}
\newcommand{\R}{\mathbb{R}}
\newcommand{\eps}{\varepsilon}
\newcommand{\D}{\mathcal{D}}
\newcommand{\norm}[1]{\left\lVert#1\right\rVert}
\newcommand{\I}{\mathcal{I}}
\newcommand{\ind}{\mathds{1}}
\newcommand{\mS}{\mathcal{S}}
\newcommand{\own}{\ni}
\newcommand{\notown}{\not\own}
\newcommand{\A}{\mathcal{A}}
\newcommand{\metric}{\texttt{acc}}
\newcommand{\unif}{\mathrm{Unif}}
\newcommand{\Sowni}{\mS_{\own i}}
\newcommand{\Snotowni}{\mS_{\notown i}}
\newcommand{\tildeO}{\widetilde{O}}
\newcommand{\Utot}{U_{\mathrm{tot}}}
\newcommand{\qtot}{q_{\mathrm{tot}}}
\author[1]{Jiachen T. Wang}
\author[2]{Ruoxi Jia}
\affil[1]{Princeton University}
\affil[2]{Virginia Tech\protect\\
\texttt{\small tianhaowang@princeton.edu}, 
\texttt{\small ruoxijia@vt.edu}
}
\date{}
\title{A Note on ``Towards Efficient Data Valuation Based on the Shapley Value''}
\begin{document}



\maketitle

\begin{abstract}
The Shapley value (SV) has emerged as a promising method for data valuation. However, computing or estimating the SV is often computationally expensive. To overcome this challenge, \cite{jia2019towards} propose an advanced SV estimation algorithm called ``Group Testing-based SV estimator'' which achieves favorable asymptotic sample complexity. 
In this technical note, we present several improvements in the analysis and design choices of this SV estimator. Moreover, we point out that the Group Testing-based SV estimator does not fully reuse the collected samples. Our analysis and insights contribute to a better understanding of the challenges in developing efficient SV estimation algorithms for data valuation.

\end{abstract}

\section{Introduction}
Data valuation, i.e., measuring the contribution of a data source to the ML training process, is an important problem in the field of machine learning (ML). For example, assessing the value of data helps to identify and remove low-quality data \citep{ghorbani2019data, kwon2021beta}, and also provides insights into a model's test-time behavior \citep{koh2017understanding}. 
Additionally, data valuation plays a critical role in incentivizing data sharing and shaping policies for data market \citep{zhu2019incentive, tian2022data}. 

Cooperative game theory and economic principles have inspired the use of the Shapley value (SV) as a principled approach for data valuation \citep{ghorbani2019data, jia2019towards}. The SV is the unique notion that satisfies natural fairness requirements in the ML context. 
The SV has shown superior performance on many ML tasks such as identifying mislabeled data or outliers. 

Despite these advantages, the SV is known to be computationally expensive. The number of utility function evaluations required by the exact SV calculation grows exponentially in the number of players (i.e., data points for the ML context). 
Even worse, for ML tasks, evaluating the utility function itself (e.g., computing the testing accuracy of the ML model trained on a given dataset) is already computationally expensive, as it requires training a model. 
For a target precision, the classic permutation sampling SV estimator \citep{castro2009polynomial} requires evaluating the utility functions for $\Omega(N^2 \log N)$ times for estimating the SV for $N$ data points. 
\cite{jia2019towards} propose an advanced estimation algorithm termed as ``Group Testing-based SV estimator'', which reduces the sample complexity to $\Omega(N (\log N)^2)$. 
This algorithm achieves greater asymptotic efficiency by increasing sample reuse, allowing each evaluation of utility to be used in the estimation of the SV for \emph{all} $N$ data points. 

In this note, we present several improvements for the analysis and algorithm design of the Group Testing-based SV estimator. Specifically, we give a more fine-grained and simpler sample complexity analysis for the Group Testing-based SV estimator in \cite{jia2019towards}. Moreover, we propose two modifications for the estimation algorithm. Both the new analysis and the new algorithm design save constant factors for the sample complexity of SV estimators. Such an improvement is significant when the non-asymptotic sample bound is used for providing confidence intervals for the estimation. 


Additionally, we point out that 
while the Group Testing-based SV estimator seems to maximize sample reuse, a large portion of samples do not effectively contribute to the estimation of each data point's SV. Our analysis and insights could aid the development of future cooperative game theory-based data valuation techniques.

\section{Background}
\label{sec:background}

In this section, we formalize the data valuation problem for ML and review the concept of the Shapley value. 

\paragraph{Data Valuation for Machine Learning.} 
We denote a dataset $D := \{ z_i \}_{i=1}^N$ containing $N$ data points. The objective of data valuation is to assign a score to each training data point in a way that reflects their contribution to model training. Denote $\I := \{1, \ldots, N\}$ as the index set. To analyze a point's ``contribution'', we define a \emph{utility function} $U: 2^N \rightarrow \R$, which maps any subset of the training set to a score indicating the usefulness of the subset. 
$2^N$ represents the power set of $N$, i.e., the collection of all subsets of $N$, including the empty set and $N$ itself. 
For classification tasks, a common choice for $U$ is the validation accuracy of a model trained on the input subset. Formally, for any subset $S \subseteq \I$, we have $U(S) := \metric(\A( \{ z_i \}_{i \in S} ))$, where $\A$ is a learning algorithm that takes a dataset $\{ z_i \}_{i \in S}$ as input and returns a model. $\metric$ is a metric function that evaluates the performance of a given model, e.g., the accuracy of a model on a hold-out test set. 
We consider utility functions with a bounded range, which aligns with the commonly used metric functions such as test classification accuracy. 
Without loss of generality, we assume throughout this note that $U(S) \in [0, 1]$. 
The goal of data valuation is to partition $\Utot := U(\I)$, the utility of the entire dataset, to the individual data point $i \in \I$. 
That is, we want to find a score vector $(\phi_i(U))_{i \in \I}$ where each $\phi_i(U)$ represents the payoff for the owner of the data point $i$.

\paragraph{The Shapley Value.} 
The SV \citep{shapley1953value} is a classic concept in cooperative game theory to attribute the total gains generated by the coalition of all players. 
At a high level, it appraises each point based on the (weighted) average utility change caused by adding the point into different subsets. 
Given a utility function $U(\cdot)$, the Shapley value of a data point $i$ is defined as 
\begin{align}
&\phi_i\left(U\right) := \frac{1}{N} \sum_{k=1}^{N} {N-1 \choose k-1}^{-1} \sum_{S \subseteq \I \setminus \{i\}, |S|=k-1} \left[ U(S \cup \{i\}) - U(S) \right]
\label{eq:shapley-formula}
\end{align}

The popularity of the Shapley value is attributable to the fact that it is the \emph{unique} data value notion satisfying the following four axioms~\citep{shapley1953value}:
\begin{itemize}
    \item Dummy player: if $U\left(S \cup i\right)=U(S)+c$ for all $S \subseteq \I \setminus i$ and some $c \in \R$, then $\phi_i\left(U\right)=c$.
    \item Symmetry: if $U(S \cup i) = U(S \cup j)$ for all $S \subseteq \I \setminus \{i, j\}$, then $\phi_i(U)=\phi_j(U)$. 
    \item Linearity: For any of two utility functions $U_1, U_2$ and any $\alpha_1, \alpha_2 \in \R$, $\phi_i \left( \alpha_{1} U_{1}+\alpha_{2} U_{2}\right)=\alpha_{1} \phi_i\left(U_{1}\right)+$ $\alpha_{2} \phi_i\left( U_{2}\right)$.
    \item Efficiency: for every $U$, $\sum_{i \in \I} \phi_i(U)=U(\I)$.
\end{itemize}
The difference $U(S \cup i) - U(S)$ is often termed the \emph{marginal contribution} of data point $i$ to subset $S \subseteq \I \setminus i$. 
We refer the readers to \citep{ghorbani2019data, jia2019towards} and the references therein for a detailed discussion about the interpretation and necessity of the four axioms in the ML context.

\section{Improved Analysis for the Group Testing-based SV Estimator from \cite{jia2019towards}}

The main obstacle to using the SV is its computational expense. To compute the exact SV as given in Equation (\ref{eq:shapley-formula}), it is necessary to calculate the marginal contribution of each data point $i$ to every possible coalition $S \subseteq \I \setminus {i}$, which is a total of $2^{N-1}$ coalitions. 
To make matters worse, for many ML tasks, each time of evaluating the utility function can already be computationally expensive as it involves training an ML model. 

To meet this challenge, \cite{jia2019towards} propose a Group Testing-based SV estimator which achieves superior asymptotic sample complexity. In this section, we review the estimation algorithm from \cite{jia2019towards}, and we provide an improved sample complexity analysis for it. 

We start by defining the performance metric of a Monte Carlo-based SV estimator. 
\begin{definition}
We say an SV estimator $\widehat \phi \in \R^N$ is an $(\eps, \delta)$-approximation to the true Shapley value $\phi = (\phi_i)_{i \in \I}$ (in $\ell_2$-norm) if and only if
\begin{align*}
\Pr_{\widehat \phi} \left[ \norm{ \phi - \widehat \phi } \le \eps \right] \ge 1-\delta
\end{align*}
where the randomness is taken over the coin flip of the estimator $\widehat \phi$. 
\end{definition}

The first and so far simplest estimation algorithm for SV is called \emph{permutation sampling} \citep{castro2009polynomial}, which samples marginal contributions for each data point $i$ based on randomly drawn permutations of the dataset. Hoeffding's inequality shows that to achieve $(\eps, \delta)$-approximation, permutation sampling estimator requires $\Omega \left( \frac{N^2}{\eps^2} \log(\frac{N}{\delta}) \right)$ evaluations for the utility $U(\cdot)$. 
In the algorithm of permutation sampling, each sampled $U(S)$ is only being used for the Shapley value estimation for at most 2 data points, which results in an extra factor of $N$ in its sample complexity. 

In contrast, the Group Testing-based SV estimator developed in \cite{jia2019towards} requires only $\widetilde \Omega \left( \frac{N}{\eps^2} \right)$ evaluations of $U(\cdot)$.\footnote{We use $\widetilde \Omega$ to hide the logarithmic factors in $\Omega$.} 
The principle behind such an improvement is that the Group Testing-based SV estimator increases the sample reuse, where each sampled $U(S)$ is used for the SV estimation for all of $N$ data points. 
The pseudo-code is summarized in Algorithm \ref{alg:gt-original}. 
At a high level, Group Testing-based estimation algorithm first estimates the \emph{difference} of the Shapley value between every pair of data points $\phi_i-\phi_j$, and then recovers each $\phi_i$ from the value differences. 

By the definition of the Shapley value, one can write the difference $\phi_i-\phi_j$ as follows:

\begin{lemma}
For any $i, j \in \I$, the difference in the Shapley value between $i$ and $j$ is
\begin{align}
\Delta_{i, j} := 
\phi_i-\phi_j = \frac{1}{N-1} \sum_{S \subseteq \I \setminus \{i, j\}} {N-2 \choose |S|}^{-1} [U(S \cup i)-U(S \cup j)]
\label{eq:shap-diff}
\end{align}
\end{lemma}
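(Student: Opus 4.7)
The plan is to expand both Shapley values using the definition (\ref{eq:shapley-formula}), partition the sums over subsets according to whether they contain the other special index, and then collect terms so that the alignment between $\phi_i$ and $\phi_j$ becomes transparent. Concretely, I would first rewrite $\phi_i$ in the index-free form $\phi_i = \frac{1}{N}\sum_{S \subseteq \I \setminus\{i\}} \binom{N-1}{|S|}^{-1}[U(S\cup i) - U(S)]$, and similarly for $\phi_j$. Then I would split each outer sum into two pieces: the subsets $S \subseteq \I\setminus\{i\}$ either contain $j$ (write $S = T \cup \{j\}$) or do not (write $S = T$), where in both cases $T \subseteq \I \setminus \{i,j\}$; analogously for $\phi_j$.

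Next, I would form $\phi_i - \phi_j$ and observe that the "crossed" terms cancel nicely. In the pieces where $S$ contains the other index, both expressions produce the term $U(T \cup \{i,j\})$, and these cancel in the subtraction, leaving only $U(T \cup i) - U(T \cup j)$ with weight $\frac{1}{N}\binom{N-1}{|T|+1}^{-1}$. In the pieces where $S$ excludes the other index, the terms $U(T)$ cancel, leaving again $U(T \cup i) - U(T \cup j)$ with weight $\frac{1}{N}\binom{N-1}{|T|}^{-1}$. Thus
\begin{align*}
\phi_i - \phi_j = \frac{1}{N}\sum_{T \subseteq \I\setminus\{i,j\}} \left[\binom{N-1}{|T|}^{-1} + \binom{N-1}{|T|+1}^{-1}\right] [U(T \cup i) - U(T \cup j)].
\end{align*}

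The only remaining step is the binomial identity
\begin{align*}
\binom{N-1}{k}^{-1} + \binom{N-1}{k+1}^{-1} = \frac{N}{N-1}\binom{N-2}{k}^{-1},
\end{align*}
which is straightforward: put both sides over a common factorial denominator to get $\frac{k!(N-2-k)!}{(N-1)!}[(N-1-k)+(k+1)] = \frac{N \cdot k!(N-2-k)!}{(N-1)!}$, and recognize the right-hand side as $\frac{N}{N-1}\binom{N-2}{k}^{-1}$. Substituting this back and cancelling the factor of $N$ yields exactly (\ref{eq:shap-diff}).

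I do not expect any real obstacle here; the proof is purely algebraic bookkeeping. The only place where one can slip is tracking which subsets are indexed by $\I\setminus\{i\}$ versus $\I\setminus\{i,j\}$ and getting the two weights $\binom{N-1}{|T|}^{-1}$ and $\binom{N-1}{|T|+1}^{-1}$ assigned to the correct pieces. Once the cancellation is done carefully and the binomial identity above is applied, the expression collapses to the stated closed form.
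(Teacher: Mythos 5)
Your proof is correct, and it uses the same decomposition (split subsets by membership of the other index, cancel the crossed terms, then apply the binomial identity $\binom{N-1}{k}^{-1}+\binom{N-1}{k+1}^{-1}=\frac{N}{N-1}\binom{N-2}{k}^{-1}$) that the paper itself employs in the analogous spot, namely the proof that $\phi_i(U')=\phi_i(U)$ for the dummy-player augmentation; the paper does not spell out a separate proof of this particular lemma, deferring to \citet{jia2019towards}. No gaps.
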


\begin{algorithm}[t]
\SetAlgoLined
\SetKwInOut{Input}{input}
\SetKwInOut{Output}{output}
\Input{Training set $D = \{(x_i,y_i)\}_{i=1}^N$, utility function $U(\cdot)$, evaluation budget $T$.}
\Output{The estimated SV for each training point $\widehat \phi \in \R^N$.}

$Z \leftarrow 2 \sum_{k=1}^{N-1} \frac{1}{k}$. 

$q_k \leftarrow \frac{1}{Z} (\frac{1}{k} + \frac{1}{N-k})$ for $k=1,\cdots,N-1$. 

Initialize matrix $B \leftarrow \bm{0} \in \R^{T \times N}$. 

\For{$t = 1$ to $T$}{
    Draw $k \sim \{1, \ldots, N-1\}$ according to distribution $q_k$. 
    
    Uniformly sample a size-$k$ subset $S$ from $\I = \{1,\cdots,N\}$. 
    
    $B_{ti} \leftarrow 1$ for all $i \in S$. 
    
    $u_t \leftarrow U(S)$. 
}
$\widehat \Delta_{i, j} \leftarrow \frac{Z}{T} \sum_{t=1}^T u_t (B_{t, i}- B_{t, j})$ for every pair of $(i, j)$ s.t. $i, j \in \I$ and $j > i$. 

Find $\widehat \phi$ by solving the feasibility problem 
\begin{equation}
\begin{array}{rl}
\sum_{i=1}^N \widehat \phi_i &= U(\I), \\
\left| (\widehat \phi_i - \widehat \phi_j) - \widehat \Delta_{i, j} \right| 
&\leq \eps/(2\sqrt{N}),~~~\forall i,j \in \I, j > i. 
\end{array}
\label{eq:feasibility}
\end{equation}

\caption{Group Testing-Based SV Estimation from \cite{jia2019towards}.}
\label{alg:gt-original}
\end{algorithm}


We now provide a more fine-grained analysis for the original Group Testing-based algorithm. The original estimator from \cite{jia2019towards} first derives accurate estimated SV differences $\widehat \Delta_{i, j}$ for all $N(N-1)/2$ pairs of data points $(i, j)$ with high probability ($\ge 1 - \delta$). The correctness of the above algorithm then follows from the lemma here. 

\begin{lemma}[Modified from Lemma 2 in \citet{jia2019towards}]
    If the estimated SV differences $\widehat \Delta_{i, j}$ satisfies $\left| \widehat \Delta_{i, j} - \Delta_{i, j} \right| \le \frac{\eps}{2 \sqrt{N}}$ for all pairs of $(i, j)$ s.t. $i, j \in \I$ and $j > i$, then any solution $\widehat \phi$ for the feasibility problem in (\ref{eq:feasibility}) satisfies $\norm{ \widehat \phi - \phi }_\infty \le \frac{\eps}{\sqrt{N}}$, which further implies $\norm{ \widehat \phi - \phi }_2 \le \eps$. 
    \label{lemma:feasibility}
\end{lemma}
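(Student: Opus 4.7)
The plan is to work with the error vector $e = \widehat\phi - \phi$ and exploit two facts: the true Shapley values satisfy efficiency $\sum_i \phi_i = U(\I)$ (so together with the first constraint of the feasibility problem we get $\sum_i e_i = 0$), and the true pairwise gaps $\Delta_{i,j} = \phi_i - \phi_j$ are close to the estimated $\widehat\Delta_{i,j}$ by hypothesis. Combining the triangle inequality on
\begin{equation*}
(\widehat\phi_i - \widehat\phi_j) - (\phi_i - \phi_j) = \bigl[(\widehat\phi_i - \widehat\phi_j) - \widehat\Delta_{i,j}\bigr] + \bigl[\widehat\Delta_{i,j} - \Delta_{i,j}\bigr]
\end{equation*}
with the two error bounds $\eps/(2\sqrt{N})$ each gives $|e_i - e_j| \le \eps/\sqrt{N}$ for every pair $i,j$.

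From here I would convert this pairwise control into a per-coordinate bound using the zero-sum constraint. The clean trick is to write
\begin{equation*}
e_i = e_i - \tfrac{1}{N}\sum_{j \in \I} e_j = \tfrac{1}{N}\sum_{j \in \I}(e_i - e_j),
\end{equation*}
and then apply the triangle inequality and the pairwise bound to conclude
\begin{equation*}
|e_i| \le \tfrac{1}{N}\sum_{j \in \I}|e_i - e_j| \le \tfrac{N-1}{N}\cdot\tfrac{\eps}{\sqrt{N}} \le \tfrac{\eps}{\sqrt{N}}.
\end{equation*}
Since this holds for every $i$, we obtain $\|\widehat\phi - \phi\|_\infty \le \eps/\sqrt{N}$. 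The $\ell_2$ bound then follows immediately from the standard inequality $\|v\|_2 \le \sqrt{N}\,\|v\|_\infty$ for vectors in $\R^N$, giving $\|\widehat\phi - \phi\|_2 \le \eps$.

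There is essentially no hard step here — the argument is a short reduction once the zero-sum property of $e$ is noticed. The only subtlety worth double-checking is that the feasibility problem in (\ref{eq:feasibility}) is indeed feasible under the hypothesis (so that ``any solution'' is a meaningful quantifier); this is immediate because the true Shapley value $\phi$ itself satisfies both constraints: efficiency holds by definition, and $|(\phi_i - \phi_j) - \widehat\Delta_{i,j}| = |\Delta_{i,j} - \widehat\Delta_{i,j}| \le \eps/(2\sqrt{N})$ by assumption. So there is no vacuous case to worry about, and the proof reduces to the two lines of algebra above.
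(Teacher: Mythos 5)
Your proof is correct, and it takes a more direct route than the paper's. Both arguments rest on the same two ingredients --- the pairwise control $|e_i - e_j| \le \eps/\sqrt{N}$ obtained by the triangle inequality from the feasibility constraint and the hypothesis, and the zero-sum property $\sum_{i \in \I} e_i = 0$ coming from efficiency --- but the paper turns them into a proof by contradiction: it assumes some $|e_i| > \eps/\sqrt{N}$, writes $\widehat\phi_i - \phi_i = c\eps'$ with $|c| > 2$, deduces that every $e_j$ then shares the sign of $e_i$, and concludes $\sum_j e_j \ne 0$, violating the efficiency constraint. Your averaging identity $e_i = \frac{1}{N}\sum_{j \in \I}(e_i - e_j)$ reaches the same conclusion in one line, avoids the case split on the sign of $c$, and in fact yields the marginally sharper per-coordinate bound $\frac{N-1}{N}\cdot\frac{\eps}{\sqrt{N}}$. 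Your observation that the feasibility problem is nonempty under the hypothesis (since $\phi$ itself satisfies both constraints) is a point the paper leaves implicit and is worth stating. The final step $\norm{e}_2 \le \sqrt{N}\,\norm{e}_\infty$ is identical in both arguments.
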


The following theorem provides an improved lower bound on the number of utility evaluations $T$ needed to achieve an $(\eps, \delta)$-approximation.

\begin{theorem}[Improved version of Theorem 3 in \citet{jia2019towards}]
\label{thm:gt-original-comp}
Algorithm \ref{alg:gt-original} returns an $(\eps,\delta)$-approximation to the Shapley value if the number of tests $T$ satisfies 
\begin{align}
    T 
    &\ge \frac{ \log( N(N-1) / \delta ) }{ (1-\qtot) \cdot h \left( \frac{\eps}{2 Z \sqrt{N} (1-\qtot) } \right) } = \Theta \left(\frac{N}{\eps^2} \log(N) \left(\log \frac{N}{\delta}\right) \right) 
    \label{eq:gt-orig-comp}
\end{align}
where $Z := 2\sum _{k=1}^{N-1} \frac{1}{k}$, $q_k := \frac{1}{Z} (\frac{1}{k} + \frac{1}{N-k})$ for $k=1,\cdots,N-1$, $\qtot := \frac{N-2}{N}q_1 + \sum_{k=2}^{N-1} q_k \left(1+\frac{2k(k-N)}{N(N-1)}\right)$, and $h(u) := (1+u)\log(1+u)-u$.
\end{theorem}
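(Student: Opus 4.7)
The plan is to reduce the theorem to a concentration bound for each pairwise estimate $\widehat\Delta_{i,j}$, apply Bennett's inequality with a sharp variance bound, and then take a union bound over all $O(N^2)$ pairs. First I would invoke Lemma~\ref{lemma:feasibility}: it suffices to show that with probability at least $1-\delta$, the event $|\widehat\Delta_{i,j} - \Delta_{i,j}| \le \eps/(2\sqrt{N})$ holds simultaneously for every pair $i<j$. A union bound over the $N(N-1)/2$ pairs reduces the task to proving this tail bound for a single pair with failure probability at most $2\delta/(N(N-1))$.

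For a fixed pair $(i,j)$, define the per-round summands $Y_t := u_t(B_{t,i}-B_{t,j})$, which are i.i.d.\ across $t$. By the design of the sampling distribution $q_k$ (this is essentially Lemma~1 rewritten in expectation form), $\E[Y_t] = \Delta_{i,j}/Z$, so $\widehat\Delta_{i,j} = \tfrac{Z}{T}\sum_t Y_t$ is unbiased. Since $u_t \in [0,1]$ and $B_{t,i}-B_{t,j} \in \{-1,0,1\}$, we have $|Y_t| \le 1$ and $Y_t^2 \le (B_{t,i}-B_{t,j})^2 = \ind[B_{t,i}\ne B_{t,j}]$. The key variance computation is then identifying
\begin{align*}
\E[Y_t^2] \le \Pr[B_{t,i}\ne B_{t,j}] = 1 - \qtot,
\end{align*}
which follows by conditioning on $|S|=k$ and observing that $\Pr[B_{t,i}=B_{t,j}\mid |S|=k] = 1+\tfrac{2k(k-N)}{N(N-1)}$ for $k\ge 2$ (from $\binom{N-2}{k-2}+\binom{N-2}{k}$ over $\binom{N}{k}$) and equals $(N-2)/N$ for $k=1$, matching the definition of $\qtot$ exactly.

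Now I would apply Bennett's inequality to $Y_1,\dots,Y_T$ with bound $b=1$ and variance proxy $\sigma^2 = 1-\qtot$: for any $t > 0$,
\begin{align*}
\Pr\!\left[\left|\tfrac{1}{T}\sum_t Y_t - \E[Y_t]\right| \ge t\right] \le 2\exp\!\left(-T(1-\qtot)\,h\!\left(\tfrac{t}{1-\qtot}\right)\right).
\end{align*}
Taking $t = \eps/(2Z\sqrt{N})$ (which translates the $\eps/(2\sqrt{N})$ target on $\widehat\Delta_{i,j}$ through the $Z$ scaling) and setting the right-hand side equal to $2\delta/(N(N-1))$ produces exactly the displayed lower bound on $T$ in \eqref{eq:gt-orig-comp}.

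For the asymptotic rate $\Theta(N\eps^{-2}\log N\log(N/\delta))$, I would use two ingredients. First, a telescoping identity gives $1-\qtot = 2/Z$ in closed form: expanding $(1-\qtot) = \tfrac{1}{Z}\sum_{k=1}^{N-1}\bigl(\tfrac{1}{k}+\tfrac{1}{N-k}\bigr)\tfrac{2k(N-k)}{N(N-1)}$, the $k(N-k)$ factor cancels and the sum collapses to $2/Z$. Second, since the argument of $h$ scales like $\eps Z/(Z^2\sqrt{N}) = O(\eps/(\sqrt{N}\log N))$ which is $o(1)$ in the regime of interest, I would invoke the Taylor expansion $h(u) = u^2/2 + O(u^3)$ to obtain $(1-\qtot)\,h(\cdot) = \Theta(\eps^2/(Z^2 N (1-\qtot))) = \Theta(\eps^2/(ZN))$. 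Combined with $Z = \Theta(\log N)$, the sample complexity bound becomes $T = \Theta(ZN\eps^{-2}\log(N/\delta)) = \Theta(N\log N \log(N/\delta)/\eps^2)$.

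The main obstacle I anticipate is the variance identification step: recognizing that the quantity $\qtot$ defined in the theorem statement is precisely $\Pr[B_{t,i}=B_{t,j}]$, and that $Y_t^2 \le \ind[B_{t,i}\ne B_{t,j}]$ yields a variance proxy that is tight enough for Bennett's inequality to beat the $N^2$ complexity of naive Hoeffding. Everything else is a mechanical application of Bennett plus the telescoping identity $1-\qtot = 2/Z$.
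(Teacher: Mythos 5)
Your proposal is correct and follows essentially the same route as the paper's proof: reduce via Lemma \ref{lemma:feasibility} to per-pair accuracy $\eps/(2\sqrt{N})$, identify $\qtot = \Pr[\beta_i = \beta_j]$ so that the second moment of the unscaled summand is at most $1-\qtot$, apply Bennett's inequality, union bound over the pairs, and use $1-\qtot = 2/Z$ with the Taylor expansion $h(u)\approx u^2/2$ for the asymptotics. Your bookkeeping of the $Z$ scaling (keeping $Y_t$ in $[-1,1]$ and translating the target to $\eps/(2Z\sqrt{N})$) is in fact slightly cleaner than the paper's, which nominally defines $\zeta$ with the factor $Z$ while still treating it as bounded by $1$; the only blemish is your intermediate claim that the argument of $h$ is $O(\eps/(\sqrt{N}\log N))$ when it is actually $\eps/(4\sqrt{N})$, which does not affect the conclusion.
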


Compared with the non-asymptotic sample lower bound in \cite{jia2019towards}, the non-asymptotic lower bound in (\ref{eq:gt-orig-comp}) saves a factor of 8 and improves $\qtot^2$ to $\qtot$. Such an improvement is significant when the non-asymptotic bound is used for providing a confidence interval for the estimated SV. 
The proof (deferred to the Appendix) is also arguably simpler. 


\section{Improved Algorithm Design for Group Testing-based SV Estimator}

In \cite{jia2019towards}, the Group Testing-based SV estimation algorithm requires that with high probability, the estimated $\widehat \Delta_{i, j}$ to be within $\frac{\eps}{2\sqrt{N}}$ error with respect of the true value $\Delta_{i, j}$ \emph{for every pair} of data points $i, j$. There are $N(N-1)/2$ such pairs in a dataset of size $N$, which leads to a factor of $\log(N(N-1)/\delta)$ in the number of required samples. 
Moreover, there might be multiple solutions for the feasibility problem in (\ref{eq:feasibility}), which means that the outputted $\widehat \phi = (\widehat \phi_i)_{i=1}^N$ will depend on the actual solver, which poses difficulties for fine-grained analysis for such an estimator. 

\paragraph{Recovering the Shapley values from $N-1$ pair of differences.} 
In fact, we do \emph{not} need to estimate the Shapley value differences $\widehat \Delta_{i, j}$ for every pair of data points $i, j$. If we already know the Shapley value of a ``pivot'' data point $*$, we can simply estimate $\phi_i - \phi_*$ for every $i \in \I$, and compute $\widehat \phi_i = \phi_* + \widehat \Delta_{i, *}$.\footnote{This technique first appears in Ruoxi Jia's Ph.D. thesis \citep{jia2018accountable}.} 
That is, only $N-1$ pairs of Shapley difference need to be estimated. 
However, in this case, we also need to directly estimate $\phi_*$, the Shapley value of the ``pivot'' data point $*$. 
This requires us to attribute a certain amount of computational budget solely used for estimating $\phi_*$, which introduces an additional step in the estimation algorithm. Moreover, the samples for estimating $\phi_*$ cannot be reused for estimating $\phi_i - \phi_*$.

\begin{algorithm}[t]
\SetAlgoLined
\SetKwInOut{Input}{input}
\SetKwInOut{Output}{output}
\Input{Training set $D = \{(x_i,y_i)\}_{i=1}^N$, utility function $U(\cdot)$, evaluation budget $T$.}
\Output{The estimated SV for each training point $\widehat \phi \in \R^N$.}

$Z \leftarrow 2 \sum_{k=1}^{N} \frac{1}{k}$. 

$q_k \leftarrow \frac{1}{Z} (\frac{1}{k} + \frac{1}{ N +1-k})$ for $k=1,\cdots,N$. 

Initialize matrix $B \leftarrow \bm{0} \in \R^{T \times (N+1)}$. 

\For{$t = 1$ to $T$}{
    Draw $k \sim \{1, \ldots, N\}$ according to distribution $q_k$. 
    
    Uniformly sample a size-$k$ subset $S$ from $\I = \{1,\cdots,N, N+1\}$. 
    
    $B_{ti} \leftarrow 1$ for all $i \in S$. 
    
    $u_t \leftarrow U'(S)$. 
}
$\widehat \Delta_{i, *} \leftarrow \frac{Z}{T} \sum_{t=1}^T u_t (B_{t, i}- B_{t, N+1})$ for all $i \in \{1, \ldots, N\}$. 

$\widehat \phi_i \leftarrow \widehat \Delta_{i, *}$ for all $i \in \{1, \ldots, N\}$. 

\caption{Improved Group Testing-Based SV Estimation.}
\label{alg:gt-new}
\end{algorithm}

\paragraph{Dummy Player technique.} 
Recall that the dummy player axiom of the Shapley value says that for a player $i$, if $U(S \cup i) = U(S)$ for all $S \subseteq \I \setminus i$, then $\phi_i=0$. 
Given a training dataset $\I = \{1, \ldots, N\}$, we augment it by adding a \emph{dummy player} called $*$, i.e., $\I' = \I \cup \{*\}$. For any utility function $U$, we augment it by setting $U'(S)=U(S)$ and $U'(S \cup *) = U(S)$ for all $S \subseteq \I$. Thus, we have $\phi_*(U') = 0$. 
We show that the augmentation from $U$ to $U'$ does not change the Shapley value of any original data points $i \in \I$. 

\begin{theorem}
$\phi_i(U') = \phi_i(U)$ for all $i \in \I$. 
\end{theorem}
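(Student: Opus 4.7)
The plan is to prove this via the equivalent permutation-based formulation of the Shapley value. Recall that equation (\ref{eq:shapley-formula}) admits the reformulation $\phi_i(U) = \E_{\sigma \sim \perm(\I)}[U(\sigma^{<i} \cup \{i\}) - U(\sigma^{<i})]$, where $\sigma^{<i}$ denotes the set of players preceding $i$ in the uniformly random permutation $\sigma$, and similarly $\phi_i(U') = \E_{\pi \sim \perm(\I')}[U'(\pi^{<i} \cup \{i\}) - U'(\pi^{<i})]$. Working with permutations rather than the weighted sum over subsets will make the dummy structure of $*$ completely transparent.

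First I would unify the two defining equations of $U'$ into the single identity $U'(T) = U(T \setminus \{*\})$ for every $T \subseteq \I'$, which is immediate. Next, I would couple $\pi \sim \perm(\I')$ with a permutation $\sigma$ of $\I$ by deleting $*$ from $\pi$; the induced $\sigma$ is uniformly distributed on $\perm(\I)$ because each element of $\perm(\I)$ is the image of exactly $N+1$ permutations in $\perm(\I')$ (one per insertion slot for $*$).

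The key structural observation is that, regardless of whether $*$ precedes or follows $i$ in $\pi$, we have $\pi^{<i} \setminus \{*\} = \sigma^{<i}$. Combined with $U'(T) = U(T \setminus \{*\})$, this gives $U'(\pi^{<i}) = U(\sigma^{<i})$ and $U'(\pi^{<i} \cup \{i\}) = U(\sigma^{<i} \cup \{i\})$, so the per-permutation marginal contribution of $i$ under $U'$ coincides with that under $U$. Taking expectations over the coupled $(\pi, \sigma)$ then yields $\phi_i(U') = \phi_i(U)$.

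There is no substantial obstacle: the argument is essentially a definitional manipulation, exploiting that the dummy player never changes any utility value. A purely combinatorial alternative, starting from (\ref{eq:shapley-formula}) and splitting the sum over $S \subseteq \I' \setminus \{i\}$ according to whether $* \in S$, would also work but requires the mildly irritating identity $\frac{1}{N+1}\binom{N}{s}^{-1} + \frac{1}{N+1}\binom{N}{s+1}^{-1} = \frac{1}{N}\binom{N-1}{s}^{-1}$ to match weights across the two formulas; the permutation route avoids this bookkeeping entirely.
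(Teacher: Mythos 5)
Your proof is correct, but it takes a genuinely different route from the paper. The paper works directly from the subset-weighted definition in Equation (\ref{eq:shapley-formula}): it splits the sum over $S \subseteq \I' \setminus \{i\}$ according to whether $* \in S$, pairs each $S \subseteq \I \setminus \{i\}$ with $S \cup \{*\}$, and collapses the two weights via the identity $\binom{N}{k}^{-1} + \binom{N}{k+1}^{-1} = \frac{N+1}{N}\binom{N-1}{k}^{-1}$ --- exactly the ``mildly irritating'' bookkeeping you flag in your final paragraph. You instead pass to the random-permutation characterization of the SV and couple $\pi \sim \perm(\I')$ with the uniform $\sigma \in \perm(\I)$ obtained by deleting $*$; since $\pi^{<i} \setminus \{*\} = \sigma^{<i}$ and $U'(T) = U(T \setminus \{*\})$, the marginal contribution of $i$ is pointwise identical under the coupling, and the result follows by taking expectations. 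All steps check out, including the claim that each $\sigma$ has exactly $N+1$ preimages so the induced marginal is uniform. What your route buys is conceptual transparency (the dummy player literally never changes any evaluated utility) and freedom from binomial manipulation; what it costs is a reliance on the equivalence between the subset-weighted and permutation forms of the Shapley value, which is classical but is not stated anywhere in this paper, so a fully self-contained write-up would need to either cite or verify that equivalence. The paper's combinatorial computation, by contrast, uses only the definition already displayed in Equation (\ref{eq:shapley-formula}).
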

\begin{proof}
\begin{align*}
    \phi_i(U') &= \frac{1}{N+1} \sum_{k=0}^N {N \choose k}^{-1} \sum_{S \subseteq \I' \setminus i, |S|=k} U'(S \cup i) - U'(S) \\
    &= \frac{1}{N+1} \sum_{k=0}^{N-1} \sum_{S \subseteq \I' \setminus\{i, *\}, |S|=k} \left[ {N \choose k}^{-1} [U'(S \cup i) - U'(S)] \right. \\
    &~~~~~~~~~~~~~~~~~~~~~~~~~~~~~~~~~~~~~~~\left.+ {N \choose k+1}^{-1} [U'(S \cup \{i, *\}) - U'(S \cup *)] \right] \\
    &= \frac{1}{N+1} \sum_{k=0}^{N-1} \sum_{S \subseteq \I \setminus i, |S|=k} \left({N \choose k}^{-1} + {N \choose k+1}^{-1}\right) [U(S \cup i) - U(S)]  \\
    &= \frac{1}{N+1} \sum_{k=0}^{N-1} \sum_{S \subseteq \I \setminus i, |S|=k} \frac{N+1}{N} {N-1 \choose k}^{-1} [U(S \cup i) - U(S)] \\
    &= \frac{1}{N} \sum_{k=0}^{N-1} \sum_{S \subseteq \I \setminus i, |S|=k} {N-1 \choose k}^{-1} [U(S \cup i) - U(S)] \\
    &= \phi_i(U)
\end{align*}
\end{proof}

\begin{theorem}
\label{thm:gt-original-new}
The improved Group Testing-based SV estimator (Algorithm \ref{alg:gt-new}) returns an $(\eps, \delta)$-approximation to the Shapley value if the number of tests $T$ satisfies 
\begin{align}
    T 
    &\ge \frac{ \log( N / \delta ) }{ (1-\qtot) \cdot h \left( \frac{\eps}{ Z \sqrt{N+1} (1-\qtot) } \right) } = O\left(\frac{N}{\eps^2} \log (N) \left(\log \frac{N}{\delta}\right)\right) 
    \label{eq:gt-new-comp}
\end{align}
where $Z := 2\sum _{k=1}^{N} \frac{1}{k}$, $q_k := \frac{1}{Z} (\frac{1}{k} + \frac{1}{N+1-k})$ for $k=1,\cdots,N$, $\qtot := \frac{N-1}{N+1} q_1 + \sum_{k=2}^{N} q_k \left(1+\frac{2k(k-N-1)}{N(N+1)}\right)$, and $h(u) := (1+u)\log(1+u)-u$.
\end{theorem}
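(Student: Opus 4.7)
The strategy is to exploit the Dummy Player augmentation established in the preceding theorem to reduce the proof to an essentially simplified version of Theorem~\ref{thm:gt-original-comp}. Because $\phi_*(U')=0$ by construction and $\phi_i(U')=\phi_i(U)$ for every $i\in\I$, the Shapley difference $\Delta_{i,*}(U') := \phi_i(U')-\phi_*(U')$ coincides with the target value $\phi_i(U)$. Consequently, the feasibility program of Algorithm~\ref{alg:gt-original} is no longer needed: a single unbiased, concentrated estimator of $\Delta_{i,*}$ for each $i \in \I$ already yields the full vector $\widehat\phi$, and the number of pair estimates drops from $\binom{N}{2}$ to $N$, which is precisely the source of the improved $\log(N/\delta)$ factor (rather than $\log(N(N-1)/\delta)$).

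The first step is to verify unbiasedness, i.e., $\mathbb{E}[Z\, u_t(B_{t,i}-B_{t,N+1})] = \phi_i$. Partitioning the size-$k$ subsets of $\I'$ according to whether they contain $i$ (and not $*$) or $*$ (and not $i$), and using the augmentation identity $U'(S\cup\{*\})=U(S)$, pairs up the positive and negative contributions into the marginal contributions $U(S'\cup i)-U(S')$ with $S'\subseteq \I\setminus\{i\}$, $|S'|=k-1$. With the specific choice $q_k = \tfrac{1}{Z}\bigl(\tfrac{1}{k}+\tfrac{1}{N+1-k}\bigr)$, the combinatorial coefficient collapses algebraically to $\tfrac{1}{N}\binom{N-1}{k-1}^{-1}$, reproducing the Shapley formula~\eqref{eq:shapley-formula}. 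This is the one routine bookkeeping step and mirrors the calculation behind Theorem~\ref{thm:gt-original-comp}, merely on the enlarged ground set $\I'$.

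The second step is concentration. Set $X_t := u_t(B_{t,i}-B_{t,N+1})$. Since $u_t\in[0,1]$ and $B_{t,i}-B_{t,N+1}\in\{-1,0,1\}$ we have $|X_t|\le 1$, and
\begin{align*}
\mathrm{Var}(X_t) \le \mathbb{E}[X_t^2] \le \Pr[B_{t,i}\ne B_{t,N+1}] = 1-\qtot,
\end{align*}
where the last equality is precisely the probability computation that defines $\qtot$ in the theorem statement. Bennett's inequality applied to the centered i.i.d.\ sum $\widehat\Delta_{i,*}-\Delta_{i,*} = \tfrac{Z}{T}\sum_t (X_t-\mathbb{E}[X_t])$ then yields
\begin{align*}
\Pr\!\Bigl[\,\bigl|\widehat\Delta_{i,*}-\Delta_{i,*}\bigr| \ge \tfrac{\eps}{\sqrt{N+1}}\,\Bigr] \le 2\exp\!\Bigl(-T(1-\qtot)\cdot h\!\bigl(\tfrac{\eps}{Z\sqrt{N+1}(1-\qtot)}\bigr)\Bigr).
\end{align*}

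The third step is a union bound over the $N$ indices $i\in\I$ together with an $\ell_\infty$-to-$\ell_2$ conversion: if $|\widehat\phi_i-\phi_i|\le \eps/\sqrt{N+1}$ for all $i$, then $\|\widehat\phi-\phi\|_2 \le \eps\sqrt{N/(N+1)}\le\eps$, establishing the first inequality in \eqref{eq:gt-new-comp}. For the asymptotic rate, I would use the Taylor estimate $h(u)\approx u^2/2$ together with the clean identity $1-\qtot = 2/Z$, obtained from $\sum_{k=1}^{N} q_k\cdot\tfrac{2k(N+1-k)}{N(N+1)} = \tfrac{2}{ZN(N+1)}\sum_k \bigl((N+1-k)+k\bigr) = \tfrac{2}{Z}$. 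Substituting then gives $T = \Theta\bigl(Z^{2}(N+1)(1-\qtot)\log(N/\delta)/\eps^2\bigr) = \Theta\bigl(N\log N\cdot \log(N/\delta)/\eps^2\bigr)$, matching the claim. I do not anticipate any genuine obstacle: the only nontrivial ingredient is the combinatorial identification of the Shapley formula in step~one, and every other step is a direct specialization of the argument in Theorem~\ref{thm:gt-original-comp} after the ground set is enlarged and the pair-count reduced.
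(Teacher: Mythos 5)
Your proposal is correct and follows essentially the same route the paper takes (it adapts the Bennett-inequality argument of Theorem~\ref{thm:gt-original-comp} to the augmented ground set $\I\cup\{*\}$, replaces the feasibility step by the direct assignment $\widehat\phi_i=\widehat\Delta_{i,*}$, and union-bounds over $N$ rather than $\binom{N}{2}$ pairs); all the key computations — the collapse of $Z q_k/\binom{N+1}{k}$ to $\tfrac{1}{N}\binom{N-1}{k-1}^{-1}$, the identity $1-\qtot=2/Z$, and the $\ell_\infty$-to-$\ell_2$ conversion with threshold $\eps/\sqrt{N+1}$ — check out. The only nit is that the two-sided tail bound together with the union bound over $N$ coordinates yields a $\log(2N/\delta)$ numerator rather than $\log(N/\delta)$, a harmless constant discrepancy that is present in the theorem statement itself rather than in your argument.
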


While the sample complexity of the modified group testing estimator is still $\tildeO(N)$ asymptotically, it improves the non-asymptotic sample bound by a factor of 2 since we only need to estimate $N$ pairs instead of $N(N-1)/2$ pairs of Shapley differences. 

\begin{remark}
Algorithm \ref{alg:gt-new} actually uses an alternative expression for the Shapley value
\begin{align*}
    \phi_i 
    &= Z \sum_{k=0}^{N-1} \frac{q_{k+1}}{ {N+1 \choose k+1} } \sum_{S \subseteq \I \setminus \{i\}, |S|=k} [U(S \cup i) - U(S)] \\
    &= Z \E_{k \sim q_k, S \sim \unif(\I \cup \{*\}), |S|=k}\left[ U(S)\ind[i \in S, * \notin S] - U(S)\ind[i \notin S, * \in S] \right] 
\end{align*}
\end{remark}


\section{Sample Reuse in Group Testing-based SV estimator}

As mentioned earlier, the Group Testing-based SV estimator saves a factor of $N$ in sample complexity compared with the permutation sampling estimator by increasing the sample reuse, i.e., each sampled utility $U(S)$ is used in the estimation of all $\Delta_{i, j}$'s, and hence the estimation of all $\phi_i$'s. 
However, this gives a false sense that all information is being utilized. 
Algorithm \ref{alg:gt-original} or \ref{alg:gt-new} constructs the sampling distribution of $S$ such that $\Delta_{i, j} = \E_{S} \left[ Z (\ind[i \in S] - \ind[j \in S]) U(S) \right]$. 
This means that among all of the sampled $S$, those who include or exclude both $i, j$ have zero contributions to the estimation of $\Delta_{i, j}$. 
Moreover, it can be shown that the probability of sampling such ineffective $S$ is 
\begin{align*}
    \qtot := \Pr[i, j \in S \text{ or } i, j \notin S] = 1 - \frac{2}{Z}
\end{align*}
(see the detailed derivation in the Appendix). 
A simple bound for the sum of harmonic series tells us that $Z = \Theta(\log N)$. Hence, we know that the ``effective'' amount of samples that can be used to estimate $\Delta_{i, j}$ is only $\Theta(1 / \log N)$ out of all collected samples. 
Therefore, the Group Testing-based estimator does not achieve the maximal possible sample reuse where all of the sampled $U(S)$ is being used to estimate all $\phi_i$'s. 

Is it possible to construct a Monte-Carlo-based SV estimator that achieves maximal sample reuse? It has been shown in \cite{wang2022data} that it is impossible to construct a distribution $\D$ such that $\phi_{i}\left(U\right) = \E_{S \sim \D|\D \own i} \left[ U(S) \right] - \E_{S \sim \D|\D \notown i} \left[ U(S) \right]$ for all $i \in \I$. Hence, an SV estimator that achieves maximal sample reuse cannot simply decide how to use the sampled $U(S)$ only based on the membership of a data point $i$. That is, it is impossible to find a single distribution $\D$ over $S$ such that $\widehat \phi$ is in the form of 
\begin{align}
    \widehat \phi_i = \frac{1}{|\Sowni|} \sum_{S \in \Sowni} U(S) - \frac{1}{|\Snotowni|} \sum_{S \in \Snotowni} U(S) \label{eq:group-testing}
\end{align}
where $\mS = \{ S_1, \ldots, S_m \}$ i.i.d. drawn from $\D$, $\Sowni = \{S \in \mS: i \in S \}$ and $\Snotowni = \{S \in \mS: i \notin S\} = \mS \setminus \Sowni$. 
The Group Testing-based SV estimator in \cite{jia2019towards} makes use of the membership of other data points in $S$ and improves the extent of sample reuse, which can serve as an excellent starting point for future research. 

\begin{remark}
While the Group Testing-based SV estimator achieves more favorable asymptotic sample complexity compared with the  permutation sampling estimator, it has been reported in several subsequent data valuation studies that the actual performance of the Group Testing-based estimator does not observably outperform permutation sampling technique \citep{wang2020principled, yan2020ifyoulike, wang2021improving, wang2022data}. In particular, \cite{wang2022data} also performed the empirical evaluation for the improved Group Testing-based estimator proposed in this note, and still do not observe significant advantages in SV estimation accuracy. This is because, despite the low sample reuse, the permutation sampling estimator directly samples the marginal contribution, which significantly reduces the estimation variance when $N$ is small. 
\end{remark}

\section{Conclusion}

In this technical note, we presented several improvements to the analysis and algorithm design of the Group Testing-based Shapley value estimator. 
Our work contributes to a better understanding of the challenges associated with developing Monte Carlo-based SV estimators. We hope our insights can inspire future research on data valuation techniques.

\newpage

\bibliographystyle{plainnat}
\bibliography{ref}


\newpage
\onecolumn

\appendix

\section{Proofs}

\subsection{Proof for Lemma \ref{lemma:feasibility}}

\begin{customthm}{\ref{lemma:feasibility}}
If the estimated SV differences $\widehat \Delta_{i, j}$ satisfies $\left| \widehat \Delta_{i, j} - \Delta_{i, j} \right| \le \frac{\eps}{2 \sqrt{N}}$ for all pairs of $(i, j)$ s.t. $i, j \in \I$ and $j > i$, then any solution $\widehat \phi$ for the feasibility problem (\ref{eq:feasibility}) satisfies $\norm{ \widehat \phi - \phi }_\infty \le \frac{\eps}{\sqrt{N}}$, which further implies $\norm{ \widehat \phi - \phi }_2 \le \eps$. 
\end{customthm}
\begin{proof}
Suppose, for contradiction, that there exists $i \in \I$ such that $\left| \widehat \phi_i - \phi_i \right| > \frac{\eps}{\sqrt{N}}$. Denote $\eps' := \eps/(2\sqrt{N})$. 
By assumption, for arbitrary $j \ne i$, we have 
\begin{align}
\left| \widehat \Delta_{i, j} - \Delta_{i, j} \right| 
= \left| (\phi_i-\phi_j) - \widehat \Delta_{i, j} \right| \leq \eps' \nonumber
\end{align}
The constraint $\left| (\widehat \phi_i - \widehat \phi_j) - \widehat \Delta_{i, j} \right| \le \eps'$ implies that 
\begin{align*}
    |(\widehat \phi_i - \phi_i) - (\widehat \phi_j - \phi_j)| \leq |\widehat \phi_i - \widehat \phi_j - \widehat \Delta_{i, j}| + |\phi_i - \phi_j - \widehat \Delta_{i, j}| \le 2\eps'
\end{align*}
Denote $\widehat \phi_i-\phi_i = c \eps'$. 
We have
\begin{align*}
    (c-2)\eps' \leq \widehat \phi_j - \phi_j\leq (c+2) \eps'
\end{align*}

The assumption of $\left| \widehat \phi_i - \phi_i \right| > \frac{\eps}{\sqrt{N}} = 2 \eps'$ either implies $c > 2$ or $c < -2$. 
If $c > 2$, we have $\widehat \phi_j - \phi_j > 0$ for any $j \ne i$. 
Then,
\begin{align*}
    \sum_{j=1}^N (\widehat \phi_j - \phi_j) = \sum_{j\neq i} (\widehat \phi_j - \phi_j) + (\widehat \phi_i - \phi_i) > 0
\end{align*}

By efficiency axiom, the sum of the Shapley value $\sum_{j=1}^N \phi_j = U(\I)$, it follows that $\sum_{j=1}^N \widehat \phi_j > U(\I)$, which does not satisfy all of the constraints in (\ref{eq:feasibility}). A similar contradiction can be made for the case of $c < -2$. Hence, for all $i \in \I$, we have $\left| \widehat \phi_i - \phi_i \right| \le \frac{\eps}{\sqrt{N}}$. 
\end{proof}

\newpage

\subsection{Proof for Theorem \ref{thm:gt-original-comp}}

To prove Theorem \ref{thm:gt-original-comp}, we use Bennett's inequality. 

\begin{lemma}[Bennett’s inequality]
\label{thm:bennett}
Let $X_1, \ldots, X_n$ be independent real-valued random variables with finite variance such that $X_i \le b$ for some $b>0$ almost surely for all $1 \le i \le n$. 
Let $\nu \ge \sum_{i=1}^n \E[X_i^2]$. For any $t > 0$, we have 
\begin{align}
    \Pr \left[ \sum_{i=1}^n X_i - \E[X_i] \ge t \right] 
    \le \exp \left( -\frac{\nu}{b^2} h \left(\frac{bt}{\nu}\right) \right)
\end{align}
where $h(x) = (1+x)\log(1+x)-x$ for $x > 0$. 
\end{lemma}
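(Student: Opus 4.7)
The plan is to follow the Chernoff--Cram\'er recipe: introduce an exponential tilt via Markov's inequality, control the moment generating function of each centered summand using the variance, and finally optimize the tilt parameter to extract the rate function $h$.

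First, for any $\lambda > 0$, exponentiation, Markov, and independence give
\[
\Pr\!\left[ \sum_{i=1}^n (X_i - \E[X_i]) \ge t \right] \le e^{-\lambda t} \prod_{i=1}^n \E\!\left[ e^{\lambda (X_i - \E[X_i])} \right].
\]
Writing $Y_i := X_i - \E[X_i]$, so that $\E[Y_i] = 0$ and the relevant upper bound $Y_i \le b$ is inherited from the boundedness hypothesis, the task reduces to controlling $\E[e^{\lambda Y_i}]$ in terms of $\E[Y_i^2]$ and $b$.

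The key technical step, which I expect to be the main obstacle, is the Bennett MGF bound: for any zero-mean $Y$ with $Y \le b$ a.s.,
\[
\E\!\left[e^{\lambda Y}\right] \le \exp\!\left( \frac{\E[Y^2]}{b^2} \bigl( e^{\lambda b} - 1 - \lambda b \bigr) \right).
\]
To prove this I would show that the function $\psi(y) := (e^{\lambda y} - 1 - \lambda y)/y^2$, extended by $\psi(0) := \lambda^2/2$, is non-decreasing on all of $\R$. A direct calculation gives $\psi'(y) = g(y)/y^3$ where $g(y) := \lambda y (e^{\lambda y} - 1) - 2(e^{\lambda y} - 1 - \lambda y)$; since $g(0) = g'(0) = 0$ and $g''(y) = \lambda^3 y\, e^{\lambda y}$, a sign analysis on each side of the origin yields $g(y)/y^3 \ge 0$ everywhere, hence monotonicity. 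Monotonicity then produces the pointwise inequality $e^{\lambda y} \le 1 + \lambda y + \psi(b)\, y^2$ whenever $y \le b$; taking expectation, invoking $\E[Y] = 0$ together with $1 + x \le e^x$, yields the stated MGF bound.

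Combining the per-term estimate with $\sum_i \E[Y_i^2] \le \nu$, the Chernoff inequality becomes
\[
\Pr\!\left[ \sum_{i=1}^n (X_i - \E[X_i]) \ge t \right] \le \exp\!\left( -\lambda t + \frac{\nu}{b^2} \bigl( e^{\lambda b} - 1 - \lambda b \bigr) \right).
\]
The final step is a single-variable optimization over $\lambda > 0$: differentiating the exponent and setting it to zero gives the optimizer $\lambda^\star = \frac{1}{b} \log(1 + bt/\nu)$. Substituting back and using $e^{\lambda^\star b} = 1 + bt/\nu$ and $\lambda^\star b = \log(1 + bt/\nu)$, the exponent collapses to $-\tfrac{\nu}{b^2}\, h(bt/\nu)$ with $h(u) = (1+u)\log(1+u) - u$, which is exactly the desired conclusion. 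Apart from the monotonicity of $\psi$, every step is routine manipulation; that monotonicity is what encodes the variance of $Y$ into the exponent and hence yields the sub-Poissonian tail behavior captured by $h$.
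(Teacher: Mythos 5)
The paper does not prove this lemma at all: it is quoted as the classical Bennett inequality (in the form of Boucheron--Lugosi--Massart) and used as a black box in the proof of Theorem \ref{thm:gt-original-comp}. So your Chernoff--Cram\'er argument is the ``right'' proof to supply, and most of it is sound: the Markov/tilting step, the monotonicity of $\psi(y)=(e^{\lambda y}-1-\lambda y)/y^2$ via the sign analysis of $g$, and the optimization $\lambda^\star=\tfrac{1}{b}\log(1+bt/\nu)$ collapsing the exponent to $-\tfrac{\nu}{b^2}h(bt/\nu)$ are all correct.

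There is, however, one genuine flaw, and it sits exactly at the point where the hypotheses of the stated lemma matter. You claim that the centered variables $Y_i:=X_i-\E[X_i]$ inherit the bound $Y_i\le b$ from $X_i\le b$. That is false whenever $\E[X_i]<0$: then $Y_i\le b-\E[X_i]>b$, and since $b\mapsto (e^{\lambda b}-1-\lambda b)/b^2$ is increasing, running your argument with the correct bound $b-\E[X_i]$ yields a strictly weaker exponent than claimed. Note also that your route would prove Bennett with the \emph{variances} $\E[Y_i^2]$ under the hypothesis $X_i-\E[X_i]\le b$, which is a different assumption, whereas the lemma as stated deliberately pairs the bound on the \emph{uncentered} $X_i$ with the raw second moments $\E[X_i^2]$. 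The standard fix is to apply your pointwise inequality to the uncentered variable: for $x\le b$, $e^{\lambda x}\le 1+\lambda x+\frac{x^2}{b^2}\bigl(e^{\lambda b}-1-\lambda b\bigr)$, hence
\begin{align*}
\E\bigl[e^{\lambda X_i}\bigr]\le 1+\lambda\,\E[X_i]+\frac{\E[X_i^2]}{b^2}\bigl(e^{\lambda b}-1-\lambda b\bigr),
\end{align*}
then multiply by $e^{-\lambda \E[X_i]}$ and use $1+u\le e^u$ to get $\E\bigl[e^{\lambda(X_i-\E[X_i])}\bigr]\le \exp\bigl(\frac{\E[X_i^2]}{b^2}(e^{\lambda b}-1-\lambda b)\bigr)$. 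With this substitution (and $\sum_i\E[X_i^2]\le\nu$), the remainder of your proof goes through verbatim.
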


\begin{customthm}{\ref{thm:gt-original-comp}}
Algorithm \ref{alg:gt-original} returns an $(\eps,\delta)$-approximation to the Shapley value if the number of tests $T$ satisfies 
\begin{align}
    T &\ge \frac{ \log( N(N-1) / \delta ) }{ (1-\qtot) \cdot h \left( \frac{\eps}{2 Z \sqrt{N} (1-\qtot) } \right) } = O\left(\frac{n}{\eps^2} \left(\log \frac{n}{\delta}\right)^2\right) 
\end{align}
where $Z := 2\sum _{k=1}^{N-1} \frac{1}{k}$, $q_k := \frac{1}{Z} (\frac{1}{k} + \frac{1}{N-k})$ for $k=1,\cdots,N-1$, $\qtot := \frac{N-2}{N}q_1 + \sum_{k=2}^{N-1} q_k \left(1+\frac{2k(k-N)}{N(N-1)}\right)$, and $h(u) := (1+u)\log(1+u)-u$.
\end{customthm}
\begin{proof}

For a subset $S \subseteq \I$, we use $\beta = (\beta_1, \ldots, \beta_N)$ to denote its binary representation where $\beta_i := \ind[i \in S]$. 
In Algorithm \ref{alg:gt-original}, the subset $S$ are sampled as follows: 

\begin{enumerate}
\item Sample subset size $k \in \{1, 2,\cdots,N-1\}$ according to the discrete distribution $(q_1, \ldots, q_{k-1})$. 
\item Uniformly sample a size-$k$ subset $S \subseteq \I$. 
\end{enumerate}

Consider any two data points $i, j \in \I$. 
In Algorithm \ref{alg:gt-original}, the Shapley value difference $\Delta_{i, j}$ is estimated based on samples from the distribution of $\zeta := Z (\beta_i - \beta_j) U(S)$. We first verify that such an estimate is unbiased. Obviously, only $\beta_i \ne \beta_j$ has non-zero contributions to its expectation. 

\begin{align}
    \E [\zeta] = \E[ Z (\beta_i - \beta_j) U(S) ] &= 
    Z \sum_{k=0}^{N-2} \frac{q_{k+1}}{{N \choose k+1}} 
    \sum_{S \subseteq \I \setminus \{i, j\}, |S|=k} \left[ U(S \cup i) - U(S \cup j) \right] 
\end{align}

Note that 
\begin{align*}
    Z\frac{q_{k+1}}{ {N \choose k+1} } 
    &=
    \left(\frac{1}{k+1} + \frac{1}{N-k-1}\right) \frac{1}{ {N \choose k+1}} \\ 
    &= \frac{N}{(k+1)(N-k-1)} \frac{(k+1)! (N-k-1)!}{N!} \\
    &= \frac{ k! (N-k-2)!}{(N-1)!} \\
    &= \frac{1}{(N-1) {N-2 \choose k} }
\end{align*}
which leads to 

\begin{align*}
    \E[ \zeta ] &= 
    \frac{1}{N-1} \sum_{k=0}^{N-2} \frac{1}{ {N-2 \choose k} }
    \sum_{S \subseteq \I \setminus \{i, j\}, |S|=k} \left[ U(S \cup i) - U(S \cup j) \right]  \\
    &= \phi_i - \phi_j
\end{align*}

Now, we analyze the variance of $\zeta$, in order to apply Bennett's inequality to bound the sample complexity for estimating $\Delta_{i, j}$. 
Note that $U \in [0, 1]$, which means that $\zeta \in [-1, 1]$. 
Recall that $\zeta = 0$ when $\beta_i = \beta_j$. 
The probability of such event is 
\begin{align*}
    \qtot := \Pr[\zeta = 0] = \Pr[ \beta_i = \beta_j ] 
    &= \Pr[\beta_i=1, \beta_j=1] + \Pr[\beta_i=0, \beta_j=0]\\
    &= \sum_{k=2}^{N-1} \frac{q_k}{{N \choose k}} {N-2 \choose k-2} + \sum_{k=1}^{N-1} \frac{q_k}{{N \choose k}} {N-2 \choose k} \\
    &= \frac{N-2}{N} q_1 + \sum_{k=2}^{N-1} \frac{q_k}{{N \choose k}} \left( {N-2 \choose k-2} + {N-2 \choose k} \right) \\
    &= \frac{N-2}{N} q_1 + \sum_{k=2}^{N-1} q_k \left( 1 + \frac{2k(k-N)}{N(N-1)} \right)
\end{align*}

Hence, $\E[\zeta^2]$ can be bounded as follows:
\begin{align*}
    \E[\zeta^2] 
    &= \Pr[\zeta \ne 0] \E[\zeta^2 | \zeta \ne 0] \\
    &\le 1 - \qtot 
\end{align*}


Given $T$ i.i.d. samples $(\zeta_t)_{t=1}^T$, by Bennett's inequality, we have 
\begin{align*}
    \Pr \left[ \left| \widehat \Delta_{i, j} - \Delta_{i, j} \right| \ge \eps \right] 
    = 
    \Pr \left[ \left| \frac{1}{T} \sum_{t=1}^T \zeta_t - \Delta_{i, j} \right| \ge \eps \right] 
    \le 
    2 \exp \left( -T (1-\qtot) h \left( \frac{\eps}{Z (1-\qtot)} \right) \right)
\end{align*}

Hence, 
\begin{align*}
    &\Pr \left[ \forall i,j~\text{s.t.}~i > j,~\left| \widehat \Delta_{i, j} - \Delta_{i, j} \right| \le \frac{\eps}{2 \sqrt{N}}  \right] \\
    &= 1 - \Pr \left[ \exists i,j~\text{s.t.}~i > j,~\left| \widehat \Delta_{i, j} - \Delta_{i, j} \right| \ge \frac{\eps}{2 \sqrt{N}}  \right] \\
    &\ge 
    1 - N(N-1) \exp \left( -T (1-\qtot) h \left( \frac{\eps}{Z (1-\qtot)} \right) \right)
\end{align*}

By setting the failure probability $\le \delta$, we have that when $T \ge \frac{ \log( N(N-1) / \delta ) }{ (1-\qtot) \cdot h \left( \frac{\eps}{2 Z \sqrt{N} (1-\qtot) } \right) }$, with probability $\ge 1-\delta$ we have 
\begin{align*}
    \left| \widehat \Delta_{i, j} - \Delta_{i, j} \right| \le \frac{\eps}{2\sqrt{N}}
\end{align*}
for all pair of $i, j$ s.t. $i > j$. 
By Lemma \ref{lemma:feasibility}, any solutions $\widehat \phi$ for the feasibility problem in (\ref{eq:feasibility}) under this event has $\norm{\widehat \phi - \phi} \le \eps$, which leads to the conclusion. 

\paragraph{Asymptotic Analysis.} It can be shown that $\qtot = 1 - \frac{2}{Z}$, and 
\begin{align}
    Z(1 - \qtot) = 2
\end{align}
Therefore, as $N\rightarrow \infty$,
\begin{align}
    \frac{\eps}{Z\sqrt{N}(1-\qtot)}\rightarrow 0
\end{align}
The first-order Taylor expansion of $h(u)$ at $0$ is $\frac{u^2}{2}$. Thus, we have
\begin{align}
    \frac{ \log( N(N-1) / \delta ) }{ (1-\qtot) \cdot h \left( \frac{\eps}{2 Z \sqrt{N} (1-\qtot) } \right) }
    & = O(NZ \log N)
\end{align}
Since $Z \leq 2(\log (N-1) + 1)$, we have $O(NZ \log N) = O(N(\log N)^2)$.

\end{proof}

\end{document}